
\documentclass{article}

\usepackage{microtype}
\usepackage{graphicx}
\usepackage{subcaption}
\usepackage{booktabs} 
\usepackage{tcolorbox}

\usepackage{hyperref}



\usepackage[preprint]{icml2026}


\usepackage{amsmath}
\usepackage{amssymb}
\usepackage{mathtools}
\usepackage{amsthm}
\usepackage{xspace}

\usepackage[capitalize,noabbrev]{cleveref}

\theoremstyle{plain}
\newtheorem{theorem}{Theorem}[section]
\newtheorem{proposition}[theorem]{Proposition}

\theoremstyle{definition}

\newtheorem{assumption}[theorem]{Assumption}
\theoremstyle{remark}

\usepackage[textsize=tiny]{todonotes}

\newcommand{\Algo}[1]{\textsc{#1}}

\newcommand{\pbandit}{\Algo{TBDFiltering}\xspace}
\newcommand{\TBDF}{TBDF}
\icmltitlerunning{Submission and Formatting Instructions for ICML 2026}
    
\begin{document}

\twocolumn[
  \icmltitle{\pbandit: Sample-Efficient Tree-Based Data Filtering}



  \icmlsetsymbol{equal}{*}

  \begin{icmlauthorlist}
    \icmlauthor{Robert Istvan Busa-Fekete}{yyy}
    \icmlauthor{Julian Zimmert}{comp}
    \icmlauthor{Anne Xiangyi Zheng}{zzz}
    \icmlauthor{Claudio Gentile}{sch}
    \icmlauthor{Andr\`as Gyorgy}{ttt}
  \end{icmlauthorlist}

  \icmlaffiliation{yyy}{Google Research, NY, USA}
  \icmlaffiliation{comp}{Google Research, Berlin, Germany}
  \icmlaffiliation{zzz}{Google Research, Mountain View, CA, USA}
  \icmlaffiliation{sch}{Google Research, NY, USA}
\icmlaffiliation{ttt}{Google DeepMind, London, UK}

  \icmlcorrespondingauthor{Robert Istvan Busa-Fekete}{busarobi@google.com}
  \icmlcorrespondingauthor{Julian Zimmert}{zimmert@google.com}
  \icmlcorrespondingauthor{Anne Xiangyi Zheng}{annezheng@google.com}
  \icmlcorrespondingauthor{Claudio Gentile}{cgentile@google.com}
  \icmlcorrespondingauthor{Andr\`as Gyorgy}{agyorgy@deepmind.com}

  \icmlkeywords{Machine Learning, ICML}

  \vskip 0.3in
]



\printAffiliationsAndNotice{}  

\begin{abstract}
The quality of machine learning models depends heavily on their training data. Selecting high-quality, diverse training sets for large language models (LLMs) is a difficult task, due to the lack of cheap and reliable quality metrics. While querying existing LLMs for document quality is common, this is not scalable to the large number (billions) of documents used in training. Instead, practitioners often use classifiers trained on sparse quality signals. In this paper, we propose a text-embedding-based hierarchical clustering approach that adaptively selects the documents to be evaluated by the LLM to estimate cluster quality. We prove that our method is query efficient: under the assumption that the hierarchical clustering contains a subtree such that each leaf cluster in the tree is pure enough (i.e., it mostly contains either only good or only bad documents), with high probability, the method can correctly predict the quality of each document after querying a small number of documents. The number of such documents is proportional to the size of the smallest subtree with (almost) pure leaves, without the algorithm knowing this subtree in advance. Furthermore, in a comprehensive experimental study, we demonstrate the benefits of our algorithm compared to other classifier-based filtering methods.  
\end{abstract}

\section{Introduction}
The quality of machine learning models depends heavily on their training data. While traditional machine learning problems usually come with a natural training set, this is not the case for large language models (LLMs), and foundation models in general.
The main and most easily accessible source of data is the World Wide Web, providing petabytes of data. Such data can be readily used for training, but is also inherently diverse in terms of topics and quality. Consequently, data curation and filtering is an essential part of the training of LLMs, including de-duplication to avoid overfitting, topics tagging to ensure diversity, and quality filtering to ensure accuracy \cite{albalak2024a,wang24dataSelectionTutorial}. The main problem in these processes is scalability, due to the sheer volume of data to be processed and the lack of cheap and reliable quality metrics.
In this paper, we focus on scalable methods for quality filtering. 

Historically, data filtering was performed using scalable, easy-to-implement heuristics. Early datasets like C4~\cite{raffel20exploring} applied rule-based filters, such as language identification (e.g., fastText~\cite{joulin16bag}), removal of lines without terminal punctuation, and $n$-gram-based deduplication. More recent large-scale efforts, including FineWeb~\cite{penedo24fineweb} and Dolma~\cite{soldaini24dolma}, employ extensive multi-stage pipelines that utilize URL filtering, quality heuristics (e.g., adapted from Gopher~\cite{rae21gopher}), and content filtering for specific data, such as boilerplate text, toxic language, or personally identifiable information. While effective, these heuristic methods are rigid, can only apply to specific cases, and hence require continuous manual tuning and updates.

To address the limitations of fixed, static heuristics, learned data selection methods have been developed. Various techniques have been proposed to quantify the contribution of data points to model performance, including influence functions \cite{hampel1974influence, koh2017understanding}, or Shapley values \cite{ghorbani2019data, wang2025data}, or bi-level optimization and meta-learning-based methods such as \citep{maclaurin15gradient,pmlr-v48-pedregosa16, lorraine2020optimizing,wang20diffRewards,shen25seal,calian2025datarater}. and perplexity-based signals from pretrained models~\cite{wenzek20ccnet}. However, many of these valuation methods are computationally prohibitive at the scale of billions of documents.

Recently, a more involved data filtering approach has emerged which consists of evaluating an LLM on the content to be selected; these techniques range from extracting such simple metrics as the perplexity assigned by a pretrained model \cite{wenzek20ccnet} to querying an LLM, called \emph{prompting} (e.g., \cite{penedo24fineweb}). Prompts are typically constructed to extract some quality information about the content, such as ``is the content of educational type?'' Though this approach has been found to be effective, prompting LLMs at scale is expensive. To overcome this limitation, the prompting feedback can be generalized through some machine learning approach, where smaller models with fewer parameters are trained to predict the response of larger models used to provide reliable quality evaluations; this latter approach is often referred to as \emph{classifier-based quality filtering} or \emph{student-teacher quality filtering}. 

Our work falls into this line of research, and aims to generalize the prompting feedback of a large and accurate model. However, instead of  training a parametric model (typically a smaller LLM \cite{penedo24fineweb}), our method is {\em non-parametric}, and is based on clusterings built upon suitable embedding models. Since we focus on textual data, we use here text-embeddings \cite{Lee2024GeckoVT}. However, our approach is directly applicable to multi-modal data given the availability of appropriate embedding methods.

At a high level, our quality filtering algorithm works as follows: we cluster the data using the distance in the embedding space,  estimate the quality of the documents in each cluster, and keep the documents in clusters with sufficiently high quality. While this would still be very expensive, due to the potentially (and practically) large number of clusters, we rely on hierarchical clustering (via affinity propagation) to reduce the associated computational cost by utilizing the extra information encoded in the embedding space. Specifically, we successively evaluate the quality of the clusters at a given granularity using sampling, and the split clusters further if their quality cannot be determined. This adaptive approach has great efficiency benefits, as we only evaluate documents using an LLM if their quality cannot be reliably estimated. The resulting algorithm is query efficient: under the assumption that the hierarchical clustering contains a subtree such that each leaf cluster in the tree is approximately pure (i.e., it  mostly contains either only good or bad documents), with high probability, the method can correctly predict the quality of each document after querying a small number of documents. The number of queried documents is proportional to the size of the smallest subtree with (almost) pure leaves, without having the algorithm know this tree in advance. 

We carried out a comprehensive experimental study to assess the impact of our data filtering approach on the downstream 
performance of the pretrained model. More concretely, we trained Gemma 3~\cite{gemmateam2025gemma3technicalreport} models of various sizes from scratch on 172B tokens selected  from three widely used web datasets. As baseline, we sampled training data uniformly at random, and compared this model to the one trained on filtered data by using our proposed filter. We found that the filtering results in a significant improvement on 8 commonly used downstream tasks compared to the performance of the baseline method. In our experiments we used the prompt template from FineWeb Edu filtering~\cite{penedo24fineweb}, another recent classification-based data filtering method. When compared to the latter method, we found that our filtering still results in better data quality. Note, however, that this comparison is still inconclusive, as the two methods used different LLMs to assign quality score to contents.

\section{Problem Formulation}
We are given a dataset $\mathcal{X}$ for which each point $x\in\mathcal{X}$ has a binary quality label $f(x)\in\{0,1\}$. We use $\bar f(\mathcal{X}')=\frac{1}{|\mathcal{X}'|}\sum_{x\in\mathcal{X}'}f(x)$ to denote the average quality of a set $\mathcal{X}' \subset \mathcal{X}$.
The goal is to filter the dataset into two disjoint sets $\mathcal{X}=\mathcal{X}_{\text{keep}}\cup \mathcal{X}_{\text{disc}}$, from which one ($\mathcal{X}_{\text{keep}}$) is kept for training and the other ($\mathcal{X}_{\text{disc}}$) is discarded.
A good filtering satisfies two criteria: a) We want to improve the quality of training data, so $\bar f(\mathcal{X}_{\text{keep}}) \gg \bar f(\mathcal{X})$; and b) we do not want to discard too much useful data, so the quality of the discarded data should be significantly below the original dataset's quality, $\bar f(\mathcal{X}_{\text{disc}}) \ll \bar f(\mathcal{X})$.
Note that these two criteria are not identical when the datasets are unbalanced. We could satisfy the first by providing $\mathcal{X}_{\text{keep}}=\{x_{\text{good}}\}$ with a single good data point. But this would not be a useful filtering captured by the fact that $\bar f(\mathcal{X}_{\text{disc}})\approx \bar f(\mathcal{X})$.

We assume that the evaluation $f(x)$ is costly to perform, which is the case in our application, where the value of $f$ is obtained by prompting an LLM. It is therefore infeasible to evaluate all instances and simply output $\mathcal{X}_{\text{keep}}=\{x\in\mathcal{X}\,|\,f(x)=1\}$.
To make the problem feasible, we assume access to a hierarchical clustering, that is, a tree $\mathcal{T}$ whose leaves are the data points in $\mathcal{X}$.
For a node $n\in\mathcal{T}$, we overload the definition $\bar f(n)=\bar f(D(n))$, where 
\begin{align*}
D(n)=\{x\in\mathcal{X}\,|\,n\text{ is ancestor of $x$ in $\mathcal{T}$}\}~.
\end{align*}
Hence $\bar f(n)$ is the average value of $f(\cdot)$ when restricting to the subtree of $\mathcal{T}$ rooted at $n$.

\begin{figure*}
    \centering
    \includegraphics[width=0.9\textwidth]{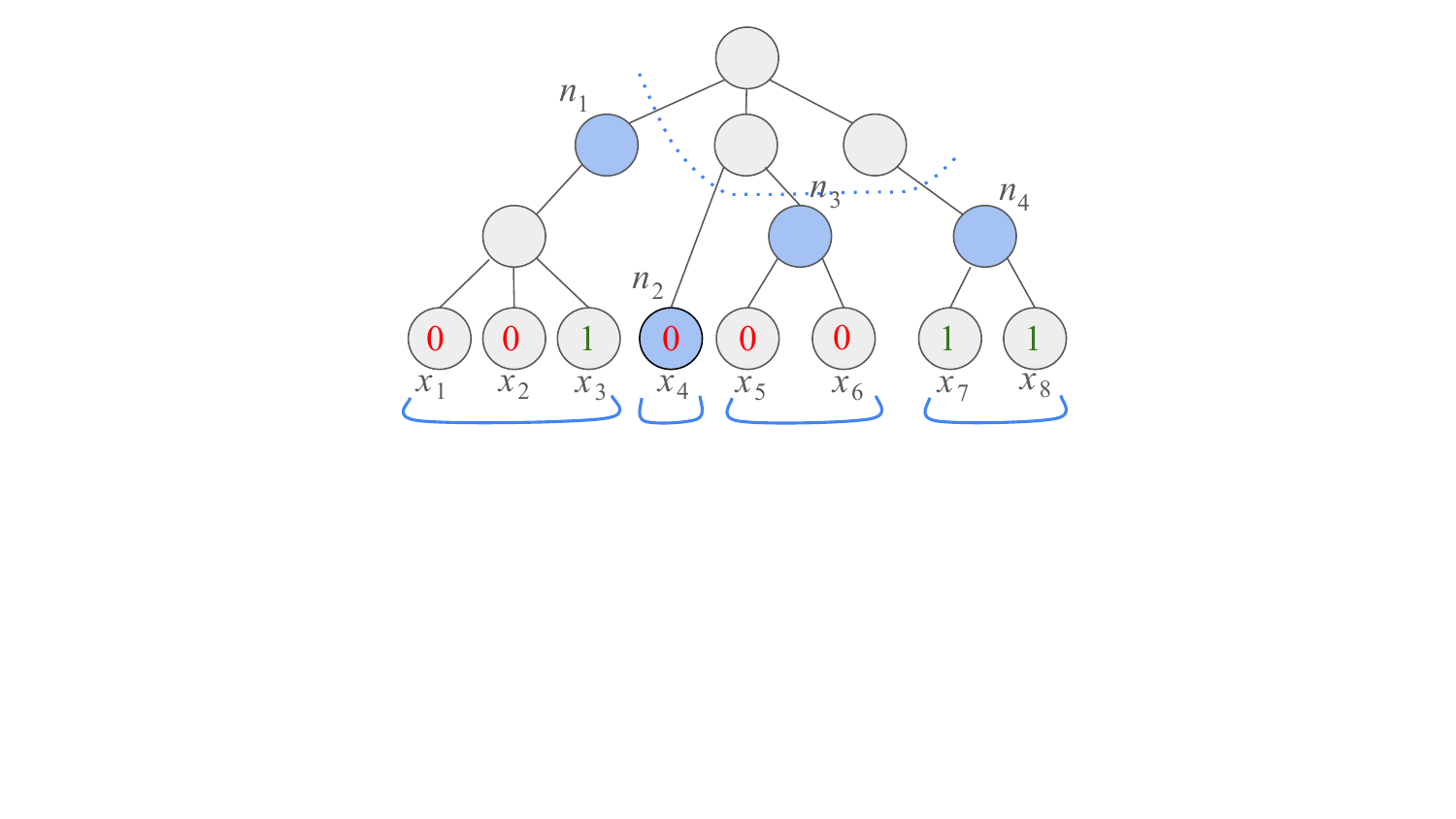}
    \vspace{-1.5in}
    \caption{A hierarchical clustering of the  data items $\mathcal{X} = \{x_1,\ldots, x_8\}$. A cut of the tree is depicted as the dotted line. This is a collection of subtrees whose leaves partition $\mathcal{X}$. In this example, the partition induced by the depicted cut is made up of the four subtrees rooted at the blue nodes $n_1,\ldots,n_4$ corresponding to the clusters $D(n_1) = \{x_1,x_2,x_3\}$, $D(n_2) = \{x_4\}$, $D(n_3) = \{x_5,x_6\}$, and $D(n_4)=\{x_7,x_8\}$. The value of the 0/1 quality label $f(\cdot)$ is also depicted inside the leaves. Assumption~\ref{ass:1} is satisfied with $\alpha' \geq 1/3$, and $\beta'=0$, as the subtrees have average $f$ value of $\bar f(n_1) = 1/3$, $\bar f(n_2) = 0$, $\bar f(n_3) = 0$, and $\bar f(n_4) = 1$.\label{f:1}}
\end{figure*}

A tree $\mathcal{T}$ gives rise to a family of \emph{induced partitions} of $\mathcal{X}$, each of which is identified by a collection of nodes $\{n_1,\ldots,n_K\}$ of $\mathcal{T}$ such that $D(n_1),\ldots,D(n_K)$ form a \emph{partition} of $\mathcal{X}$ (that is, they are disjoint and their union contains all elements of $\mathcal{X}$). We call $n_1,\ldots,n_K$ the nodes of the partition, and its complexity is simply the number $K$ of sets (or nodes) it contains. These are the nodes at the lower border of a {\em cut} of $\mathcal{T}$ (see also Figure~\ref{f:1}).

\begin{assumption}\label{ass:1}
The tree $\mathcal{T}$ contains meaningful structure about the data. Specifically, we assume that there exists a low complexity partition induced by the tree and a pair of nonnegative parameters $\alpha', \beta'$, with $\alpha'+\beta' < 1$, such that for each node $n$ in the partition, it holds $\bar f(n) \leq \alpha'$ or $\bar f(n)\geq 1-\beta'$. 
\end{assumption}

Assumption~\ref{ass:1}  is illustrated in Figure~\ref{f:1}. 
We further assume that every non-leaf node of the tree has at least two children. This is without loss of generality. We can obtain a pruned tree on-the-fly by recursively skipping over nodes that have a single child whenever we collect the children of a node.

\section{Algorithm and Theoretical Analysis}
We propose a greedy algorithm to carry out our filtering, the Tree-Based Data Filtering Algorithm, \pbandit\, for short, illustrated in Algorithm~\ref{alg: greedy}.
The algorithm maintains over time a set of active nodes $\mathcal{A}$, and it starts with marking the root of the tree as active. 
The algorithm evaluates the average quality of an active node $n$ by sampling $N_{\max}$ leaves under node $n$ uniformly at random, evaluate all samples and construct the empirical quality $\widehat f(n)$ as the mean of these evaluations.
If the empirical mean is sufficiently pure, i.e., $\widehat f(n) \geq 1-\beta$ or $\widehat f(n) \leq \alpha$, respectively, we classify all leaves under this node to be either \emph{kept} or \emph{discarded}.
If neither of these conditions hold, we remove the node from $\mathcal{A}$ and instead append all of its children to it.
The algorithm continues until it eventually runs out of nodes to evaluate, which in the worst-case happens when it reaches the bottom of the tree.

\begin{algorithm}
\begin{algorithmic}
\STATE {\bf input:} $\alpha, \beta, N_{\max}$
\STATE $\mathcal{A}\gets \{root(\mathcal{T})\}$, $\mathcal{X}_{\text{disc}}=\{\},\mathcal{X}_{\text{keep}}=\{\}$
\WHILE{$\mathcal{A}$ is not empty}
\STATE Remove $n$ from $\mathcal{A}$. Create estimator $\widehat f(n)$ of $\bar f(n)$ by sampling and evaluating $N_{\max}$ leaves uniformly drawn from $D(n)$.
\IF{$\widehat f(n) \ge 1-\beta$}
\STATE $\mathcal{X}_{\text{keep}}\gets \mathcal{X}_{\text{keep}} \cup D(n)$
\ENDIF
\IF{$\widehat f(n) \le \alpha$}
\STATE $\mathcal{X}_{\text{disc}}\gets \mathcal{X}_{\text{disc}} \cup D(n)$
\ENDIF
\IF{$\widehat f(n) \in (\alpha,1-\beta)$}
\STATE Add all children of $n$ to $\mathcal{A}$
\ENDIF
\ENDWHILE
\end{algorithmic}
\caption{The greedy algorithm \pbandit.}
\label{alg: greedy}
\end{algorithm}

The following result gives sample complexity guarantees for our algorithm, as a function of $K$, the complexity of the final tree cut, defined as the number of subtrees the algorithm classifies (the blue nodes in Figure~\ref{f:1}).

\begin{proposition}
\label{lem:purity}
\Cref{alg: greedy} will produce a filtering $\mathcal{X}_{\text{disc}},\mathcal{X}_{\text{keep}}$ such that for any $\delta \in (0,1)$, with probability at least $1-\delta$, 
\[
\bar f(\mathcal{X}_{\text{disc}}) \leq \alpha + \sqrt{\frac{\log
\big(1.3K/\delta\big)}{N_{\max}}}
\]
and 
\[
\bar f(\mathcal{X}_{\text{keep}}) \geq 1-\beta - \sqrt{\frac{\log
\big(1.3K/\delta\big)}{N_{\max}}}\,,
\]
where $K$ is a random variable denoting the complexity of the final tree cut.
The sample complexity of the algorithm is bounded by $2K N_{\max}$.
\end{proposition}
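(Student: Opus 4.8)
The plan is to split the argument into a purely combinatorial part bounding the number of evaluations, and a concentration part controlling the empirical quality estimates at the nodes the algorithm classifies.

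First I would establish the combinatorial bound. The nodes the algorithm ever evaluates are exactly the nodes of the ``cut subtree'' it grows: its $K$ leaves are the classified nodes, and its internal nodes are the nodes that got split. Because every non-leaf node of $\mathcal{T}$ has at least two children, a subtree with $K$ leaves has at most $K-1$ internal nodes, so the algorithm performs at most $2K-1$ evaluations, each drawing $N_{\max}$ leaves; this yields the $2KN_{\max}$ sample-complexity bound. The same counting gives the structural fact I will need later: the $t$-th node ever evaluated satisfies $K\ge\lceil(t+1)/2\rceil$.

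Next, for the quality bounds, fix $\epsilon_k=\sqrt{\log(1.3k/\delta)/N_{\max}}$ and define the good event $\mathcal{E}$: for every node $n$ the algorithm evaluates, $|\widehat f(n)-\bar f(n)|\le\epsilon_K$, where $K$ is the realized final complexity. On $\mathcal{E}$, any node $n$ placed in $\mathcal{X}_{\text{disc}}$ has $\widehat f(n)\le\alpha$, hence $\bar f(n)\le\alpha+\epsilon_K$; since $\mathcal{X}_{\text{disc}}$ is the disjoint union of the corresponding $D(n)$'s, $\bar f(\mathcal{X}_{\text{disc}})$ is a convex combination of these values and is therefore at most $\alpha+\epsilon_K$. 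The bound for $\mathcal{X}_{\text{keep}}$ is symmetric. So everything reduces to showing $\Pr[\mathcal{E}]\ge 1-\delta$.

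Proving $\Pr[\mathcal{E}]\ge 1-\delta$ is the step I expect to be the main obstacle, because the set of evaluated (and of classified) nodes is data-dependent and $K$ is random, so one cannot just union-bound Hoeffding over an a priori fixed node set at an a priori fixed confidence level. I would resolve this with a sequential/adaptive union bound: enumerate evaluations in the order they occur, observing that the $t$-th evaluation draws fresh i.i.d.\ uniform samples from $D(n_t)$ given the history up to step $t$ (the event ``$n_t$ is evaluated'' depends only on samples taken at strict ancestors, so it is independent of $n_t$'s own samples). Assign the $t$-th evaluation a failure budget proportional to $1/t^2$ so that the budgets sum to at most $\delta$, apply Hoeffding at the matching resolution, and then convert the time-indexed deviation into the claimed $K$-indexed one using $K\ge\lceil(t+1)/2\rceil$; since each value of $K$ is then charged only a bounded number of times, summing the per-step failure probabilities stays below $\delta$ and produces the stated logarithmic form (the constant $1.3$ absorbing the normalization of the weights and the factor-two loss from covering the ``keep'' and ``discard'' sides and the two-sided deviation within a single budget). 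The remaining details — the exact bookkeeping of constants and the degenerate cases $\mathcal{X}_{\text{disc}}=\emptyset$ or $\mathcal{X}_{\text{keep}}=\emptyset$ — are routine.
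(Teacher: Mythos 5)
Your proposal matches the paper's proof in all essentials: the same reduction of $\bar f(\mathcal{X}_{\text{disc}})$ and $\bar f(\mathcal{X}_{\text{keep}})$ to the per-node estimates of the classified cut nodes, the same counting argument (at least two children per internal node, hence at most roughly $2K$ evaluations, giving the $2KN_{\max}$ sample bound), and the same sequential union bound with per-evaluation budgets proportional to $1/t^2$ applied via conditional Hoeffding, converted to the $K$-indexed deviation through $t\le 2K$. The paper's version is just a concrete instantiation of your plan (with $\delta_i'=6\delta/(i^2\pi^2)$ and the constant $1.3$ emerging from $4\pi^2/3$ after halving the exponent), so no substantive difference.
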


\begin{proof}
    Let $\{\mathcal{T}_1,\mathcal{T}_2,\dots,\mathcal{T}_K\}$ be the collection of subtrees contained in the final cut produced by the algorithm, and let $\mathcal{N}_{\text{disc}},\mathcal{N}_{\text{keep}}$ denote the roots of the subtrees that are classified as \emph{keep} or \emph{discard} respectively.
    Since $\mathcal{X}_{\text{disc}}$ is exactly the union of the leaves under the nodes of $\mathcal{N}_{\text{disc}}$, we have $\bar f(\mathcal{X}_{\text{disc}}) \leq \max\{\bar f(n)\,|\,n\in\mathcal{N}_{\text{disc}} \}$, and analogously for $\bar f(\mathcal{X}_{\text{keep]}}) \geq \min\{\bar f(n)\,|\,n\in\mathcal{N}_{\text{keep}} \}$.

    By the definition of the algorithm, we further have a bound on the empirical quality of the nodes: $\widehat f(n) \leq \alpha$ for $n\in\mathcal{N}_{\text{disc}}$ and $\widehat f(n) \geq 1-\beta$ for $n\in\mathcal{N}_{\text{keep}}$.
    To prove the lemma, it remains to bound $|\widehat f(n)-\bar f(n)|$ for all the roots of the final cut.
    
    By Hoeffding's inequality, the empirical mean $\widehat\mu$ of $N_{\max}$ i.i.d. samples of a Bernoulli random variable with mean $\mu$ satisfies, with probability at least $1-\delta'$, 
    $$
    |\mu-\widehat\mu|\leq \sqrt{\frac{\log(2/\delta')}{2N_{\max}}}~.
    $$
    We cannot directly apply Hoeffding's inequality to the nodes in $\mathcal{N}_{\text{disc}},\mathcal{N}_{\text{keep}}$ because the identity of these elements is random based on the evaluation outcome of $\widehat f$. Instead we note that the evaluation of the $i$-th node by the algorithm is conditionally independent, hence using a sequence of $\delta'_1,\delta'_2,\dots$ with $\delta'_i=\frac{6\delta}{i^2\pi^2}$ ensures by union bound ( $\sum_{i=1}^\infty\delta'_i=\delta$), that with probability at least $1-\delta$, the $i$-th estimation of the algorithm satisfies Hoeffding's bound with $\delta'_i$, simultaneously for all $i$. 
    Since each node has at least two children, 
    if the final cut has complexity $K$, the algorithm evaluated no more than $2K$ nodes to terminate. Any node $n\in\mathcal{N}_{\text{disc}},\mathcal{N}_{\text{keep}}$ has been evaluated at time $i\leq 2K$. Hence we have under the good event with probability at least $1-\delta$:
    \begin{align*}
    |\bar f(n)-\widehat f(n)| 
    & \leq 
    \sqrt{\frac{\log(2/\delta'_i)}{2N_{\max}}} \\ 
    & \leq 
    \sqrt{\frac{\log(2/\delta'_{2K})}{2N_{\max}}}\\
    &<  
    \sqrt{\frac{\log(1.3K/\delta)}{N_{\max}}}\,.
    \end{align*}
Combining with the above concludes the proof.    
\end{proof}
The worst case complexity of $K$ is the size of dataset $\mathcal{X}$, and we assumed that evaluating these many data points is infeasible. Hence we need a bound on $K$ to meaningfully apply this algorithm. 
The following proposition guarantees such a bound with high probability under \Cref{ass:1}.
\begin{proposition}
Let $K'$ be the complexity of the tree cut with thresholds $\alpha',\beta'$ given by \Cref{ass:1}. Running \Cref{alg: greedy} with any $\alpha,\beta, N_{\max}$ satisfying
$$
\alpha \geq \alpha' + \sqrt{\frac{\log(1.3K'/\delta)}{N_{\max}}}
$$ 
and
$$
\beta \geq \beta' + \sqrt{\frac{\log(1.3K'/\delta)}{N_{\max}}}
$$ 
guarantees with probability at least $1-\delta$ that the complexity of the final tree cut is bounded by $K\leq K'$.
\end{proposition}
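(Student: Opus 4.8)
The plan is to show that, on a high-probability ``good event,'' the algorithm never explores strictly below the cut guaranteed by \Cref{ass:1}, so its own final cut can be no finer than that one. Write $\mathcal{P}^\star=\{m_1,\dots,m_{K'}\}$ for the nodes of the \Cref{ass:1} partition, and let $\mathcal{S}$ be the set of all nodes of $\mathcal{T}$ that are equal to, or ancestors of, some $m_i$. I would first pin down the combinatorics of $\mathcal{S}$: since $\mathcal{P}^\star$ is a partition, no $m_i$ is an ancestor of another, and a short case analysis on a child $c$ of a node $n\in\mathcal{S}$ (any leaf under $c$ is covered by a unique $m_i$, and if that $m_i$ were a strict ancestor of $n$ it would be nested with the $m_j$ that $n$ sits above, contradicting the partition property) shows that $\mathcal{S}$ is a subtree of $\mathcal{T}$ rooted at $root(\mathcal{T})$ whose leaves are exactly $m_1,\dots,m_{K'}$, and that whenever $n\in\mathcal{S}$ is \emph{not} one of the $m_i$, \emph{every} child of $n$ again lies in $\mathcal{S}$. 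Because every internal node of $\mathcal{T}$ has at least two children, a subtree with $K'$ leaves has at most $2K'-1$ nodes, so $|\mathcal{S}|\le 2K'-1$.

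Next I would reuse the ``good event'' from the proof of \Cref{lem:purity}: with $\delta'_i=\tfrac{6\delta}{i^2\pi^2}$, with probability at least $1-\delta$ the $i$-th evaluation the algorithm performs obeys $|\widehat f-\bar f|\le\sqrt{\log(2/\delta'_i)/(2N_{\max})}$ simultaneously for all $i$. Working on this event, I would prove by induction on the order of evaluations the claim ``the first $i$ evaluated nodes all lie in $\mathcal{S}$.'' The base case is immediate since $root(\mathcal{T})\in\mathcal{S}$. For the step, a node enters the active set only as a child of a previously evaluated node $n'$ that was \emph{split}; by the inductive hypothesis the earlier evaluated nodes are distinct members of $\mathcal{S}$, so $n'$ was evaluated at some index $j\le|\mathcal{S}|\le 2K'-1<2K'$, and hence its confidence radius is at most $\sqrt{\log(2/\delta'_{2K'})/(2N_{\max})}<\sqrt{\log(1.3K'/\delta)/N_{\max}}$ by exactly the calculation in \Cref{lem:purity} (with $K'$ in place of $K$). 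If $n'$ were one of the $m_i$, then $\bar f(n')\le\alpha'$ or $\bar f(n')\ge 1-\beta'$; combined with this radius bound and the hypotheses $\alpha\ge\alpha'+\sqrt{\log(1.3K'/\delta)/N_{\max}}$, $\beta\ge\beta'+\sqrt{\log(1.3K'/\delta)/N_{\max}}$, this forces $\widehat f(n')\le\alpha$ or $\widehat f(n')\ge 1-\beta$, i.e.\ $n'$ is classified rather than split --- a contradiction. So $n'$ is not a cut node, and by the structural property of $\mathcal{S}$ all of its children, including the one just added, lie in $\mathcal{S}$.

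It remains to convert ``the exploration stays inside $\mathcal{S}$'' into $K\le K'$. On the good event, the $K$ nodes of the algorithm's final cut are exactly the classified nodes, all of which lie in $\mathcal{S}$, hence each is an ancestor of (or equal to) some $m_i$. Choosing, for each final-cut node $c$, such an $m_i=:\psi(c)$ gives a map into $\mathcal{P}^\star$; it is injective, because if two distinct final-cut nodes both sat above the same $m_i$ their subtrees would intersect in the nonempty set $D(m_i)$, contradicting that the final cut is a partition of $\mathcal{X}$. Therefore $K\le K'$.

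The main obstacle --- and the reason for the induction on evaluation order rather than a one-shot argument --- is the mild circularity between two facts: to argue that a cut node, once reached, gets classified (so the algorithm does not descend past it) one needs its confidence radius to be small, which requires knowing it is evaluated within the first $2K'$ steps; but that bound on the step count is only available once we already know the exploration has stayed confined to $\mathcal{S}$. Threading the invariant ``the first $i$ evaluated nodes lie in $\mathcal{S}$'' through the induction breaks this loop. Everything else --- the case analysis establishing the structure of $\mathcal{S}$, and the re-derivation of the radius inequality with $K'$ replacing $K$ --- is routine.
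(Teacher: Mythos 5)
Your proposal is correct and follows essentially the same route as the paper: the same union-bound good event, the observation that the nodes at or above the Assumption~\ref{ass:1} cut number at most $2K'$ so every cut node reached is evaluated early enough to be classified rather than split, and the conclusion that the algorithm's cut is a coarsening of the assumed one. The paper states this tersely; your induction on evaluation order (confining exploration to $\mathcal{S}$) and the injective map onto the cut nodes simply make explicit the steps the paper leaves implicit.
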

\begin{proof}
By the same argument as in the proof of \Cref{lem:purity}, with probability at least $1-\delta$, the first $2K'$ evaluations of nodes by the algorithm satisfy 
$$
|\bar f(n)-\widehat f(n)|\leq  \sqrt{\frac{\log(1.3K'/\delta)}{N_{\max}}}~.
$$
Under this event, any root node of the subtrees in the cut produced by thresholds $\alpha',\beta'$ will not be split further if they are evaluated in the first $2K'$ evaluations of the algorithm.
However, since the total number of nodes above a cut of complexity $K'$ is at most $K'$, the algorithm under this event will either classify all these nodes at a higher level or reach them within the first $2K'$ rounds.
\end{proof}

It is important to note that the preceding analysis does not strictly require the quality signals, $f(x)$, to be binary. The same theoretical framework applies verbatim to continuous signals where, e.g., $f(x) \in [0,1]$. Under this generalization, the fundamental components of our approach remain unchanged: First, the concepts of cluster purity and the upper and lower thresholds encoded by $\alpha$ and $\beta$ remain entirely valid. Second, the core goal remains the separation of high-quality documents (where $f(x)$ is near 1) from low-quality ones (where $f(x)$ is near 0). While the experiments detailed in Sections \ref{sec: implementation} and \ref{sec: results} utilize ordinal scores,
rather than binary values, this distinction is rather inessential, and does not detract from the applicability of what is contained in this section.

\section{Implementation Details}
\label{sec: implementation}

The \pbandit algorithm is a generic algorithmic scheme which can be used with various text embeddings, clustering algorithms and prompted LLM model. We summarize our modeling choices in this section, along with the ablation setup we used.

\subsection{Text representation}

We embed each document by using the Gecko embedding~\cite{Lee2024GeckoVT}. We used the \texttt{text-multilingual-embedding-002} model\footnote{\url{https://docs.cloud.google.com/vertex-ai/generative-ai/docs/embeddings/get-text-embeddings}} whose embedding dimension is 768. This model can handle only $2048$ tokens and any excess is silently truncated. Since we would like to evaluate whole documents, each document that is longer than the token limit is chunked and the chunks are embedded separately. If a chunk is too short ($<$ 50 tokens), we neglected the chunk. So the algorithm is run at chunk level, and ultimately the score of a document is derived from the chunk level scores described later by taking the average score of \pbandit over the chunks. Note that more than $95\%$ of the documents consist of less than 2048 tokens, thus the chunking has  a marginal impact overall.

\subsection{Affinity clustering}

We use the distributed hierarchical clustering algorithm proposed by
\citet{bateni2017affinity}.
This algorithm builds on the classical Bor\r{u}vka's algorithm \cite{boruuvka1926jistem} for finding Minimum spanning trees (MST). The algorithm maintains a set of clusters initialized to the family of singleton clusters.
In every round, it computes in parallel the shortest edge of each cluster to another cluster, which in our setting is based on the cosine similarity of the Gecko embedding.
Finally, it merges all clusters that are connected by shortest edges computed in this round.
This algorithm can be stopped at any prespecified number of clusters since it produces a hierarchical order of clusters.

Our empirical observations indicate that this algorithm generates well-balanced clusters. This stability stems from the merging process: because every cluster is paired with at least one other in each iteration, the minimum cluster size at least doubles per round. By applying five compression rounds, the algorithm produces a balanced hierarchical tree with a total depth of six.

\subsection{Ablation}
\label{sec:ablation}

In our ablation setting we used three Gemma 3 models~\citep{gemmateam2025gemma3technicalreport} with 270M, 1B and 4B parameters, respectively. We trained each model for 172B token horizon with 32,768 sequence length. The batch size was set to 64 for the 270M and 1B, and 256 for the 4B model. We used the same learning rate and optimization process as in Gemma 3. All models are trained from scratch. These baselines are compared to the corresponding models trained with the very same parameters, but where data is filtered via \pbandit by taking the documents with top-$k$ scores so as it consists of training data with 172B tokens, and the comparison is carried out in an iso-flop setup.

We carried out our comparison over eight commonly used evaluation benchmarks:  HellaSwag~\cite{zellers2018swagaf}, WinoGrande~\cite{winogrande}, SIQA~\cite{sap19siqa}, PIQA~\cite{bisk20piqa}, ARC~\cite{clark18arc}, Commonsense QA~\cite{talmor19commonsenseqa}, MMLU~\cite{zhao-et-al-2025-mmlu-cf}, Book QA~\cite{mihaylov2018can}.
Each of these tasks require to compute accuracy based on the response of the model. To decrease the noise in the evaluation, we averaged out the accuracy of the last $10$ checkpoints. The checkpoints are stored every 50 steps. The accuracy scores vary from task to task, therefore we computed the relative improvement for each task, and then averaged them to obtain a result in aggregate form across models. For completeness, we also report the absolute scores in Appendix~\ref{sa:results}.

\subsection{Datasets and statistics}

We evaluated our approach using three standard web-curated benchmarks: FineWeb \cite{penedo2024finewebdatasetsdecantingweb}, ThePile \cite{pile}, and C4 \cite{c4data}. For FineWeb, we implemented URL-based deduplication by grouping entries by URL and randomly selecting a single version for the training set, resulting in a final corpus of 4.13T tokens.

Detailed characteristics of these datasets are provided in Table~\ref{tab:data_sets_main_stat}. Notably, the impact of document chunking was marginal, as typical web document lengths fall well within the context window of Gecko embeddings.

\begin{table}[ht!]
    \centering
    \begin{tabular}{|c|c|c|c|c|}
    \hline
    Dataset & \#Documents & \#Tokens & \#Chunks  \\
    \hline \hline
    C4       & 367M   & 173B  & 394M \\
    ThePile  & 210M   & 334B & 261M \\
    FineWeb  &  5.85B & 4.13T & 5.98T \\
    \hline
    \end{tabular}
    \caption{Main statistics of benchmark datasets.}
    \label{tab:data_sets_main_stat}
\end{table}

\subsection{Prompting}
To assess document quality, we used two different prompts: one is taken from the FineWeb EDU~\cite{penedo2024finewebdatasetsdecantingweb} quality filtering approach, the other  is written by us. The latter prompt, named \Algo{General}, is general purpose, not specifically focusing on educational content. This prompt template can be found in Appendix~\ref{app:prompt_general}. We refer to the \pbandit algorithm as $\Algo{\TBDF(FW-EDU})$ and $\Algo{\TBDF(FW-General})$, depending on which prompt template is used.  Gemini 2.5 Flash\footnote{\url{https://docs.cloud.google.com/vertex-ai/generative-ai/docs/models/gemini/2-5-flash}} is used to prompt in both \pbandit instances. Both prompt templates expect the model to provide an ordinal feedback from 0 to 5. We used this ordinal feedback as a real value, averaged it out and divided by 5 at each cluster. This results in normalized scores into $[0,1]$ that are more aligned with the assumptions of our algorithm. We computed the confidence interval for this score in a Bayesian setting by sampling the posterior distribution with a population of size 100. This sampled population from the posterior is used to compute confidence intervals. In each case, we 
used the non-informative prior, which is the Dirichlet distribution with parameters $(1,1,1,1,1,1)$ in this case. The reason why we used these Bayesian intervals instead of the non-parametric intervals contained in the description and analysis of Algorithm \ref{alg: greedy} is that a credible interval is less conservative and, in practice, it still achieves a good coverage.

\begin{figure}[ht!]
    \centering
    \begin{subfigure}[t]{0.5\textwidth}
    \includegraphics[width=0.95\linewidth]{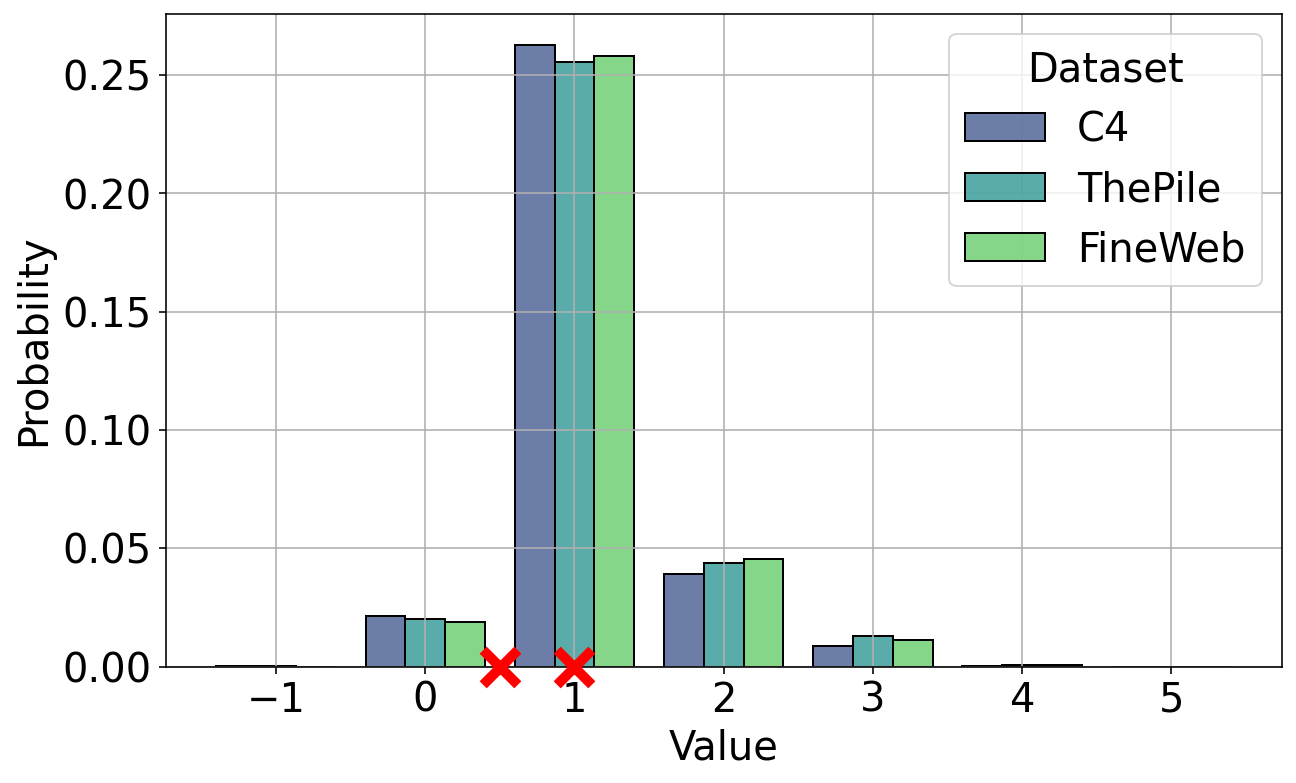}
    \caption{Using \Algo{FW-EDU} prompt}
    \label{fig:finweb_prompt_dist}
    \end{subfigure}
    ~
    \centering
    \begin{subfigure}[t]{0.5\textwidth}
    \includegraphics[width=0.95\linewidth]{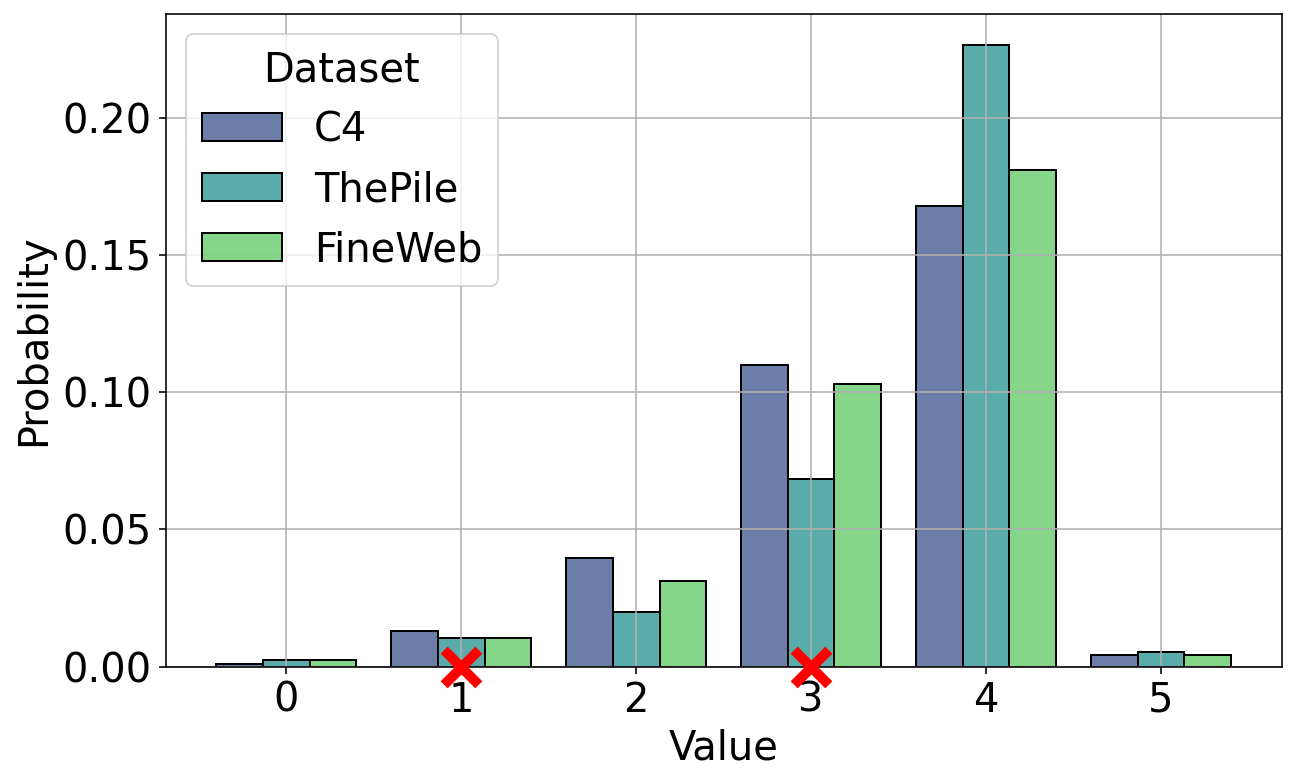}
    \caption{Using \Algo{General} prompt}
    \label{fig:c4}
    \end{subfigure}
    \caption{Prompt feedback distribution computed based on 100k randomly selected chunks from the three datasets and two prompts we used with \pbandit. We indicated the failure of the prompt response by $-1$ which happened very rarely. In our implementation, we assigned to these cases 0 score. The two red crosses in the two histograms correspond to the values of thresholds $\alpha$ and $1-\beta$ that have been selected in the two cases, multiplied by 5.}
    \label{fig:prompt_distribution}
\end{figure}

We selected 100k chunks from each dataset and evaluated them using the \Algo{FW-EDU} and \Algo{General} prompts. The distribution of the ordinal feedback provided by the model can be seen in Figure \ref{fig:prompt_distribution}. As expected, the model response are quite different for the two prompt templates we used, since they implemented quite different evaluation criteria. From practical point of view, these empirical histograms help us pick a feasible pair of threshold parameters $\alpha$ and $1-\beta$ for the algorithm. With the \Algo{FW-EDU} prompt, we set $\alpha=0.1$ and $1-\beta=0.2$ (which in Figure \ref{fig:prompt_distribution} become 0.5 and 1, respectively). With the \Algo{General} prompt, we set $\alpha=0.2$ and $1-\beta=0.6$ (corresponding to 1 and 3 in Figure \ref{fig:prompt_distribution}).

The parameter $N_{\max}$ in Algorithm \ref{alg: greedy} controls the maximum number of prompts per cluster. If the number of chunks that are evaluated exceeds $N_{\max}$, the \pbandit algorithm proceeds by evaluating the child nodes in the tree. In our experiments, we set $N_{\max} = 100$, which implies that the cluster quality is roughly approximated within a $0.01$ additive error in each case. With this setting, less than 10\% of the chunks are evaluated by \pbandit on both prompt templates.

\section{Experimental Results}\label{sec: results}

We report the performance of the Gemma 3 models with various parameters (270M, 1B, 4B), as compared to the baselines.

\subsection{Relative improvement of model performance}

\begin{figure*}[ht!]
    \centering
    \begin{subfigure}[t]{1.0\textwidth}
    \includegraphics[width=1.0\linewidth]{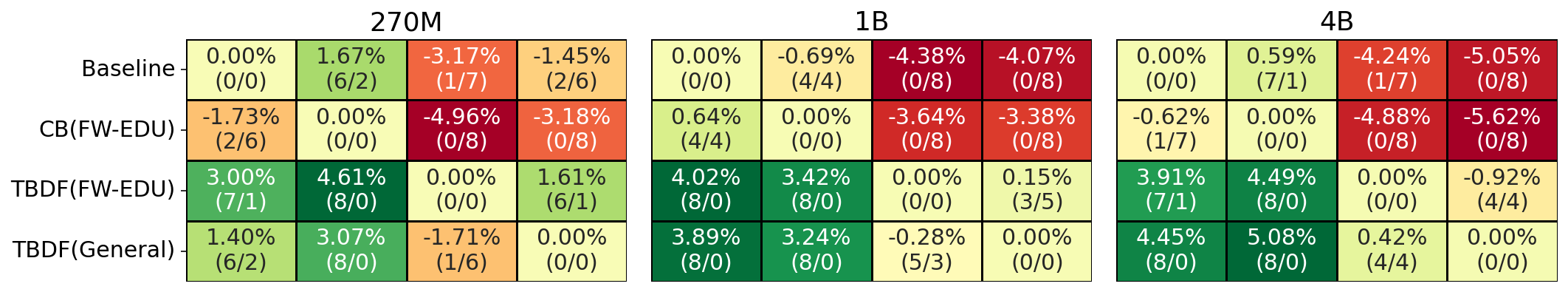}
    \caption{ThePile}
    \label{fig:thepile}
    \end{subfigure}
    ~
    \centering
    \begin{subfigure}[t]{1.0\textwidth}
    \includegraphics[width=1.0\linewidth]{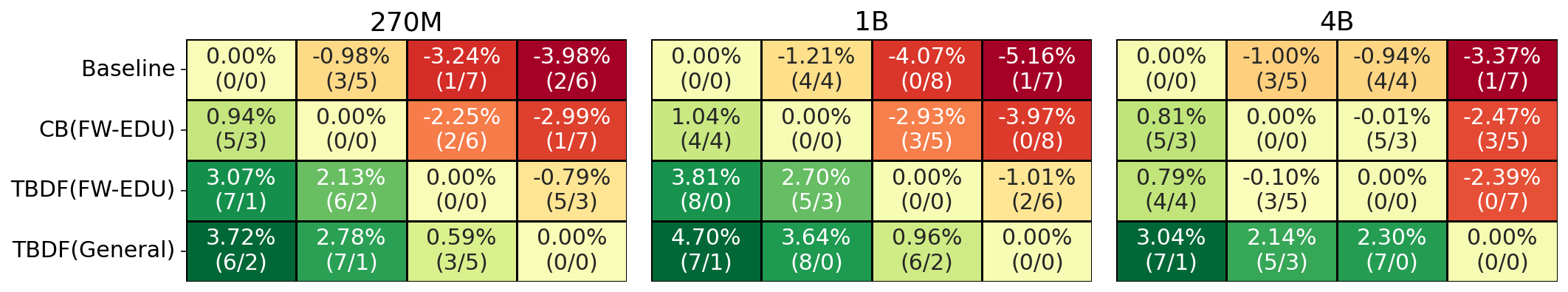}
    \caption{C4}
    \label{fig:c4}
    \end{subfigure}
    ~
    \centering
    \begin{subfigure}[t]{1.0\textwidth}
    \includegraphics[width=1.0\linewidth]{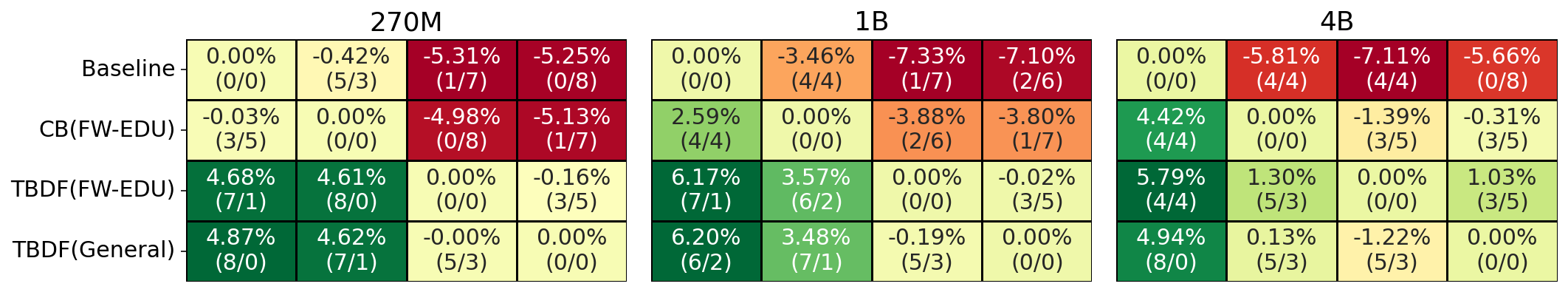}
    \caption{FineWeb}
    \label{fig:fineweb}
    \end{subfigure}
    \caption{The heatmap of average relative improvement comparing different training setups, specifically when no filtering is applied, and when filtering is carried out through \Algo{CB(FW-EDU)}, \Algo{TBDF(FW-EDU)} and \Algo{TBDF(General)}. The results for ThePile, C4 and FineWeb datasets are shown in Figure \ref{fig:thepile}, \ref{fig:c4} and \ref{fig:fineweb}, respectively. In each subfigure, the performance of the Gemma 3 model with parameters 270M, 1B and 4B is shown from left to right.
    Each cell displays the relative improvement score averaged over the 8 evaluation tasks mentioned in Section \ref{sec:ablation}. Numbers within brackets indicate the count of evaluation tasks where the model outperformed versus underperformed against the baseline. For example, a result of 6/2 signifies that the model surpassed the baseline on six tasks while performing inferiorly on two. The cells are colored depending on the relative difference in a pairwise comparison: green means that the ablation is better with the corresponding filtering, red indicates inferior performance.
    }
\end{figure*}

The baseline model is training without data filtering. The results for each dataset, that include ThePile, C4 and FineWeb, can be seen in Figure~\ref{fig:thepile}, Figure~\ref{fig:c4} and Figure~\ref{fig:fineweb}, respectively. In each cell, the average relative improvement is reported along with the number of evaluation tasks where the model was superior and inferior, respectively, separated by a slash ($/$) within the brackets. We run the \pbandit algorithm with two instantiations with various prompt templates: (1) We used the FineWeb-Edu prompt which is used to train the FineWeb EDU data filtering model in a student-teacher framework. We refer to this version of \pbandit filtering as \Algo{\TBDF(FW-EDU)}. (2) We also ran the \pbandit algorithm by using a general purpose prompt template which expects the LLM model to give a general assessment of its content. This prompt is reported in Appendix~\ref{app:prompt_general}. We refer to this version of the \pbandit algorithm as \Algo{\TBDF(General)}.

These results reveal some general trends. First of all, the model trained on filtered data improves its performance over the non-filtered model by  1-5\% in a relative sense for all model sizes. Second, the prompt used in \pbandit does not make too much difference in the relative metric, since the relative improvement of \Algo{\TBDF(FW-EDU)} and \Algo{\TBDF(General)} are on par with each other. The third finding is that the classification-based approach using FineWeb-Edu model is inferior for 270M and 1B, and on par on the 4B model with the \pbandit-based filtering with the very same prompt. This model is referred to as $\Algo{CB(FW-EDU)}$ in Figures~\ref{fig:thepile}-\ref{fig:fineweb}.

\subsection{Where does the improvement come from?}

We compared the filtering methods specific to each evaluation task. Table~\ref{tab:eval_tasks} shows the relative improvement achieved by different filterings computed for each downstream evaluation task separately. \pbandit improves on each evaluation task uniformly no matter the prompt template used in the algorithm. The largest relative improvement of \pbandit is achieved on \Algo{ARC} and \Algo{BookQA}. Similarly, the \Algo{CB(FW-EDU)} also achieved the largest improvement on \Algo{ARC}, but on the summarization task \Algo{BookQA} the  improvement was marginal, unlike \pbandit.
At this token horizon, \Algo{CB(FW-EDU)} was not able to improve on a number of evaluation tasks; however this does not contradict the findings in~\cite{penedo2024finewebdatasetsdecantingweb}, since in that paper the authors trained a different model with a different token horizon. 

\begin{table*}[]
    \centering
    \begin{tabular}{|c|c|c|c|c|c|c|c|c|c|c|}
    \hline
    \hline
    & HellaSwag  & ARC  & Winogrande  & PIQA  \\
    \hline
    TBDF(FW-EDU)	 & $2.36 \quad  (7/2)$ & $14.03 \quad  (9/0)$ & $1.43 \quad  (7/2)$ & $1.35 \quad  (9/0)$ \\
    TBDF(General)	 & $3.81 \quad  (9/0)$ & $10.26 \quad  (8/1)$ & $2.79 \quad  (9/0)$ & $1.89 \quad  (8/1)$ \\
    CB(FW-EDU)	 & $0.04 \quad  (5/4)$ & $11.01 \quad  (6/3)$ & $-0.25 \quad  (3/6)$ & $-0.54 \quad  (3/6)$ \\
    \hline
    \hline
     & SIQA  & Commonsense QA  & MMLU  & BookQA  \\
    \hline
    TBDF(FW-EDU)	 & $0.86 \quad  (7/2)$ & $3.41 \quad  (6/3)$ & $4.46 \quad  (7/2)$ & $7.45 \quad  (7/2)$ \\
    TBDF(General)	 & $1.68 \quad  (8/1)$ & $3.90 \quad  (8/1)$ & $1.52 \quad  (6/3)$ & $10.66 \quad  (8/1)$ \\
    CB(FW-EDU)	 & $-0.56 \quad  (3/6)$ & $-0.09 \quad  (5/4)$ & $-0.61 \quad  (4/5)$ & $1.07 \quad  (3/6)$ \\
            \hline
    \end{tabular}
    \medskip
    \caption{Average relative improvement per evaluation task computed for all models: Gemma 3 with 270M, 1B and 4B parameters. In braces are the counts of the cases, over all datasets and model sizes, when the baseline performance was inferior/superior compared to the model performance trained on filtered data.}
    \label{tab:eval_tasks}
\end{table*}

\subsection{Why does clustering help?}
Finally, we tried to get a better understanding of the extent to which clustering helps in scaling up the prompting. We checked how homogeneous the clusters are with respect to the prompting feedback. 
In this experiment, we used the general purpose prompt \Algo{General}.

First, we sampled $10k$ documents from each of the datasets uniformly at random, and computed the entropy of the prompting feedback for these documents (which can be viewed as a discrete distribution with 6 possible outcomes, from 0 to 5). These entropy values are indicated as a 
dashed red line in Figure~\ref{fig:cluster_purity}.  

\begin{figure*}[ht!]
    \centering
    \includegraphics[width=1.0\linewidth]{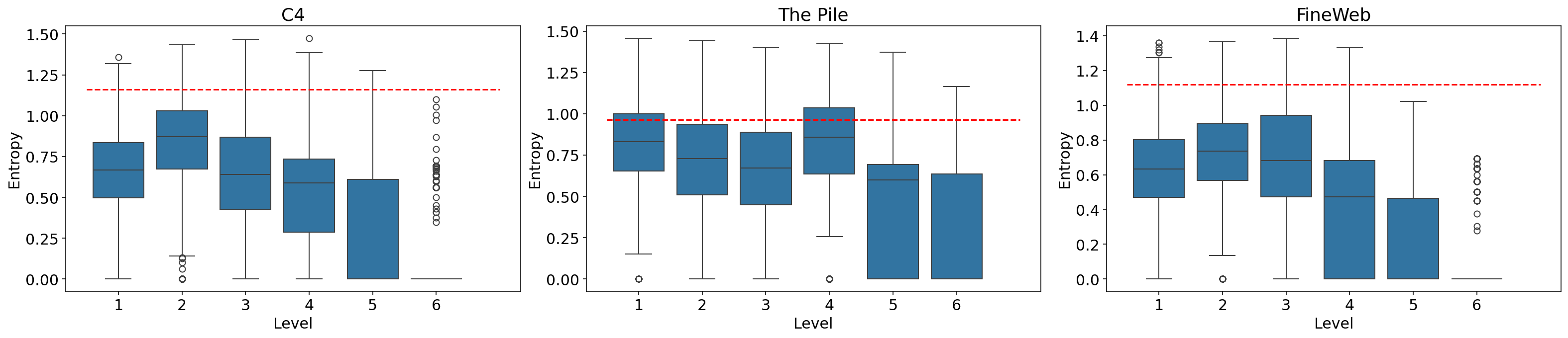}
    \caption{Boxplot representation of the distribution of entropy of model responses at each level of the hierarchy. The entropy tends to decrease as we move from level 1 to level 6.}
    \label{fig:cluster_purity}
\end{figure*}

Then we selected uniformly at random 100 clusters from each level of the cluster hierarchy, prompted 100 instances from each of these clusters, and computed the entropy for each clusters. If the cluster was smaller than $100$, we evaluated all documents from the cluster exhaustively, so the entropy was computed based on fewer prompting feedback than 100. Figure~\ref{fig:cluster_purity} shows the distribution of these one hundred entropy values for each level of the cluster hierarchy, and each dataset separately. One can see that the entropy values tend to be lower as we go deeper into the cluster hierarchy (i.e., from left to right on each subplot), which indicates that the clusters are purer at lower levels, as one would expect. It is also important to note that the entropy of clusters at every level are uniformly lower than if we selected the 10k documents uniformly at random, regardless of the clustering, indicating why clustering helps here.
It is the purity of the clusters at lower levels that provides the ability to 
\pbandit to effectively select high-quality documents. This is in line with what we stated in Assumption~\ref{ass:1}.

\section{Conclusion and Ongoing Work}

In this work, we introduced \pbandit, a sample-efficient, non-parametric approach for filtering large-scale datasets for LLM training. Addressing the critical challenge of scalability in data curation, our method leverages a hierarchical clustering structure to better align with the semantic information contained in the text embeddings. By modeling the filtering process as an adaptive sampling problem on this hierarchy, we significantly reduce the reliance on expensive oracle evaluations (such as LLM prompting) while maintaining high precision in identifying high-quality content.

Our contributions are both theoretical and empirical. Theoretically, we proved that \pbandit is sample efficient, in that its sample complexity is proportional to the logarithm of the structural complexity $K$ of the optimal tree cut rather than the logarithm of the dataset size $n$. This ensures that our algorithm effectively exploits the ``purity'' of clusters--a property we empirically validated by observing decreasing entropy in prompting feedback at deeper levels of the hierarchy.

Empirically, we demonstrated that Gemma 3 models (ranging from 270M to 4B parameters) trained on data selected by \pbandit consistently outperform baselines trained on unfiltered data across diverse benchmarks, including \Algo{HellaSwag}, \Algo{MMLU}, and \Algo{PIQA}. Furthermore, our method remains competitive with, and often superior to, state-of-the-art classifier-based methods like FineWeb-Edu, without the need to train specialized ``student'' scoring models. This performance robustness holds across multiple massive datasets, including ThePile, C4, and FineWeb EDU.

Ongoing work is exploring the application of \pbandit to multi-modal domains, given that the underlying embedding and clustering mechanisms are modality-agnostic. Additionally, further restraints to the filtering process may include suitable notions of {\em diversity} among the retained samples. \pbandit offers a flexible and scalable paradigm for data curation, reducing the computational barrier to training high-quality foundation models.

\section*{Impact Statement}

This paper presents work whose goal is to advance the field of Machine
Learning. There are many potential societal consequences of our work, none
which we feel must be specifically highlighted here.

\bibliographystyle{icml2026}

\begin{thebibliography}{33}
\providecommand{\natexlab}[1]{#1}
\providecommand{\url}[1]{\texttt{#1}}
\expandafter\ifx\csname urlstyle\endcsname\relax
  \providecommand{\doi}[1]{doi: #1}\else
  \providecommand{\doi}{doi: \begingroup \urlstyle{rm}\Url}\fi

\bibitem[Albalak et~al.(2024)Albalak, Elazar, Xie, Longpre, Lambert, Wang, Muennighoff, Hou, Pan, Jeong, Raffel, Chang, Hashimoto, and Wang]{albalak2024a}
Albalak, A., Elazar, Y., Xie, S.~M., Longpre, S., Lambert, N., Wang, X., Muennighoff, N., Hou, B., Pan, L., Jeong, H., Raffel, C., Chang, S., Hashimoto, T., and Wang, W.~Y.
\newblock A survey on data selection for language models.
\newblock \emph{Transactions on Machine Learning Research}, 2024.
\newblock ISSN 2835-8856.
\newblock URL \url{https://openreview.net/forum?id=XfHWcNTSHp}.
\newblock Survey Certification.

\bibitem[Bateni et~al.(2017)Bateni, Behnezhad, Derakhshan, Hajiaghayi, Kiveris, Lattanzi, and Mirrokni]{bateni2017affinity}
Bateni, M., Behnezhad, S., Derakhshan, M., Hajiaghayi, M., Kiveris, R., Lattanzi, S., and Mirrokni, V.
\newblock Affinity clustering: Hierarchical clustering at scale.
\newblock \emph{Advances in Neural Information Processing Systems}, 30, 2017.

\bibitem[Bisk et~al.(2020)Bisk, Zellers, Bras, Gao, and Choi]{bisk20piqa}
Bisk, Y., Zellers, R., Bras, R.~L., Gao, J., and Choi, Y.
\newblock {PIQA:} reasoning about physical commonsense in natural language.
\newblock In \emph{The Thirty-Fourth {AAAI} Conference on Artificial Intelligence, {AAAI} 2020, The Thirty-Second Innovative Applications of Artificial Intelligence Conference, {IAAI} 2020, The Tenth {AAAI} Symposium on Educational Advances in Artificial Intelligence, {EAAI} 2020, New York, NY, USA, February 7-12, 2020}, pp.\  7432--7439. {AAAI} Press, 2020.
\newblock \doi{10.1609/AAAI.V34I05.6239}.
\newblock URL \url{https://doi.org/10.1609/aaai.v34i05.6239}.

\bibitem[Bor\r{u}vka(1926)]{boruuvka1926jistem}
Bor\r{u}vka, O.
\newblock O jist{\'e}m probl{\'e}mu minim{\'a}ln{\'\i}m.
\newblock 1926.

\bibitem[Calian et~al.(2025)Calian, Farquhar, Kemaev, Zintgraf, Hessel, Shar, Oh, Gy{\"o}rgy, Schaul, Dean, van Hasselt, and Silver]{calian2025datarater}
Calian, D.~A., Farquhar, G., Kemaev, I., Zintgraf, L., Hessel, M., Shar, J., Oh, J., Gy{\"o}rgy, A., Schaul, T., Dean, J., van Hasselt, H., and Silver, D.
\newblock Datarater: Meta-learned dataset curation.
\newblock In \emph{The Thirty-ninth Annual Conference on Neural Information Processing Systems}, 2025.
\newblock URL \url{https://openreview.net/forum?id=vUtQFnlDyv}.

\bibitem[Clark et~al.(2018)Clark, Cowhey, Etzioni, Khot, Sabharwal, Schoenick, and Tafjord]{clark18arc}
Clark, P., Cowhey, I., Etzioni, O., Khot, T., Sabharwal, A., Schoenick, C., and Tafjord, O.
\newblock Think you have solved question answering? try arc, the ai2 reasoning challenge.
\newblock \emph{ArXiv}, abs/1803.05457, 2018.
\newblock URL \url{https://api.semanticscholar.org/CorpusID:3922816}.

\bibitem[Gao et~al.(2020)Gao, Biderman, Black, Golding, Hoppe, Foster, Phang, He, Thite, Nabeshima, Presser, and Leahy]{pile}
Gao, L., Biderman, S., Black, S., Golding, L., Hoppe, T., Foster, C., Phang, J., He, H., Thite, A., Nabeshima, N., Presser, S., and Leahy, C.
\newblock The {P}ile: An 800gb dataset of diverse text for language modeling.
\newblock \emph{arXiv preprint arXiv:2101.00027}, 2020.

\bibitem[{Gemma Team}(2025)]{gemmateam2025gemma3technicalreport}
{Gemma Team}.
\newblock Gemma 3 technical report, 2025.
\newblock URL \url{https://arxiv.org/abs/2503.19786}.

\bibitem[Ghorbani \& Zou(2019)Ghorbani and Zou]{ghorbani2019data}
Ghorbani, A. and Zou, J.
\newblock Data shapley: Equitable valuation of data for machine learning.
\newblock In \emph{International conference on machine learning}, pp.\  2242--2251. PMLR, 2019.

\bibitem[Hampel(1974)]{hampel1974influence}
Hampel, F.~R.
\newblock The influence curve and its role in robust estimation.
\newblock \emph{Journal of the american statistical association}, 69\penalty0 (346):\penalty0 383--393, 1974.

\bibitem[Jiachen~Wang(2024)]{wang24dataSelectionTutorial}
Jiachen~Wang, Ludwig~Schmidt, R.~J.
\newblock Advancing data selection for foundation models: From heuristics to principled methods.
\newblock NeurIPS 2024 Tutorial, 2024.
\newblock URL \url{https://neurips.cc/virtual/2024/tutorial/99530}.
\newblock Accessed on: 2025-04-30.

\bibitem[Joulin et~al.(2016)Joulin, Grave, Bojanowski, and Mikolov]{joulin16bag}
Joulin, A., Grave, E., Bojanowski, P., and Mikolov, T.
\newblock Bag of tricks for efficient text classification.
\newblock \emph{arXiv preprint arXiv:1607.01759}, 2016.

\bibitem[Koh \& Liang(2017)Koh and Liang]{koh2017understanding}
Koh, P.~W. and Liang, P.
\newblock Understanding black-box predictions via influence functions.
\newblock In \emph{International conference on machine learning}, pp.\  1885--1894. PMLR, 2017.

\bibitem[Lee et~al.(2024)Lee, Dai, Ren, Chen, Cer, Cole, Hui, Boratko, Kapadia, Ding, Luan, Duddu, Abrego, Shi, Gupta, Kusupati, Jain, Jonnalagadda, Chang, and Naim]{Lee2024GeckoVT}
Lee, J., Dai, Z., Ren, X., Chen, B., Cer, D., Cole, J.~R., Hui, K., Boratko, M., Kapadia, R., Ding, W., Luan, Y., Duddu, S. M.~K., Abrego, G.~H., Shi, W., Gupta, N., Kusupati, A., Jain, P., Jonnalagadda, S.~R., Chang, M.-W., and Naim, I.
\newblock Gecko: Versatile text embeddings distilled from large language models.
\newblock \emph{ArXiv}, abs/2403.20327, 2024.
\newblock URL \url{https://api.semanticscholar.org/CorpusID:268793455}.

\bibitem[Lorraine et~al.(2020)Lorraine, Vicol, and Duvenaud]{lorraine2020optimizing}
Lorraine, J., Vicol, P., and Duvenaud, D.
\newblock Optimizing millions of hyperparameters by implicit differentiation.
\newblock In \emph{International conference on artificial intelligence and statistics}, pp.\  1540--1552. PMLR, 2020.

\bibitem[Maclaurin et~al.(2015)Maclaurin, Duvenaud, and Adams]{maclaurin15gradient}
Maclaurin, D., Duvenaud, D., and Adams, R.
\newblock Gradient-based hyperparameter optimization through reversible learning.
\newblock In Bach, F. and Blei, D. (eds.), \emph{Proceedings of the 32nd International Conference on Machine Learning}, volume~37 of \emph{Proceedings of Machine Learning Research}, pp.\  2113--2122, Lille, France, 07--09 Jul 2015. PMLR.
\newblock URL \url{https://proceedings.mlr.press/v37/maclaurin15.html}.

\bibitem[Mihaylov et~al.(2018)Mihaylov, Clark, Khot, and Sabharwal]{mihaylov2018can}
Mihaylov, T., Clark, P., Khot, T., and Sabharwal, A.
\newblock Can a suit of armor conduct electricity? a new dataset for open book question answering.
\newblock In \emph{Proceedings of the 2018 Conference on Empirical Methods in Natural Language Processing}, pp.\  2710--2720, 2018.

\bibitem[Pedregosa(2016)]{pmlr-v48-pedregosa16}
Pedregosa, F.
\newblock Hyperparameter optimization with approximate gradient.
\newblock In Balcan, M.~F. and Weinberger, K.~Q. (eds.), \emph{Proceedings of The 33rd International Conference on Machine Learning}, volume~48 of \emph{Proceedings of Machine Learning Research}, pp.\  737--746, New York, New York, USA, 20--22 Jun 2016. PMLR.
\newblock URL \url{https://proceedings.mlr.press/v48/pedregosa16.html}.

\bibitem[Penedo et~al.(2024{\natexlab{a}})Penedo, Kydl{\'{\i}}cek, Allal, Lozhkov, Mitchell, Raffel, von Werra, and Wolf]{penedo24fineweb}
Penedo, G., Kydl{\'{\i}}cek, H., Allal, L.~B., Lozhkov, A., Mitchell, M., Raffel, C.~A., von Werra, L., and Wolf, T.
\newblock The fineweb datasets: Decanting the web for the finest text data at scale.
\newblock In Globersons, A., Mackey, L., Belgrave, D., Fan, A., Paquet, U., Tomczak, J.~M., and Zhang, C. (eds.), \emph{Advances in Neural Information Processing Systems 38: Annual Conference on Neural Information Processing Systems 2024, NeurIPS 2024, Vancouver, BC, Canada, December 10 - 15, 2024}, 2024{\natexlab{a}}.

\bibitem[Penedo et~al.(2024{\natexlab{b}})Penedo, Kydlíček, allal, Lozhkov, Mitchell, Raffel, Werra, and Wolf]{penedo2024finewebdatasetsdecantingweb}
Penedo, G., Kydlíček, H., allal, L.~B., Lozhkov, A., Mitchell, M., Raffel, C., Werra, L.~V., and Wolf, T.
\newblock The fineweb datasets: Decanting the web for the finest text data at scale, 2024{\natexlab{b}}.
\newblock URL \url{https://arxiv.org/abs/2406.17557}.

\bibitem[Rae et~al.(2021)Rae, Borgeaud, Cai, Millican, Hoffmann, Song, Aslanides, Henderson, Ring, Young, Rutherford, Hennigan, Menick, Cassirer, Powell, van~den Driessche, Hendricks, Rauh, Huang, Glaese, Welbl, Dathathri, Huang, Uesato, Mellor, Higgins, Creswell, McAleese, Wu, Elsen, Jayakumar, Buchatskaya, Budden, Sutherland, Simonyan, Paganini, Sifre, Martens, Li, Kuncoro, Nematzadeh, Gribovskaya, Donato, Lazaridou, Mensch, Lespiau, Tsimpoukelli, Grigorev, Fritz, Sottiaux, Pajarskas, Pohlen, Gong, Toyama, de~Masson~d'Autume, Li, Terzi, Mikulik, Babuschkin, Clark, de~Las~Casas, Guy, Jones, Bradbury, Johnson, Hechtman, Weidinger, Gabriel, Isaac, Lockhart, Osindero, Rimell, Dyer, Vinyals, Ayoub, Stanway, Bennett, Hassabis, Kavukcuoglu, and Irving]{rae21gopher}
Rae, J.~W., Borgeaud, S., Cai, T., Millican, K., Hoffmann, J., Song, H.~F., Aslanides, J., Henderson, S., Ring, R., Young, S., Rutherford, E., Hennigan, T., Menick, J., Cassirer, A., Powell, R., van~den Driessche, G., Hendricks, L.~A., Rauh, M., Huang, P., Glaese, A., Welbl, J., Dathathri, S., Huang, S., Uesato, J., Mellor, J., Higgins, I., Creswell, A., McAleese, N., Wu, A., Elsen, E., Jayakumar, S.~M., Buchatskaya, E., Budden, D., Sutherland, E., Simonyan, K., Paganini, M., Sifre, L., Martens, L., Li, X.~L., Kuncoro, A., Nematzadeh, A., Gribovskaya, E., Donato, D., Lazaridou, A., Mensch, A., Lespiau, J., Tsimpoukelli, M., Grigorev, N., Fritz, D., Sottiaux, T., Pajarskas, M., Pohlen, T., Gong, Z., Toyama, D., de~Masson~d'Autume, C., Li, Y., Terzi, T., Mikulik, V., Babuschkin, I., Clark, A., de~Las~Casas, D., Guy, A., Jones, C., Bradbury, J., Johnson, M.~J., Hechtman, B.~A., Weidinger, L., Gabriel, I., Isaac, W., Lockhart, E., Osindero, S., Rimell, L., Dyer, C., Vinyals, O., Ayoub, K., Stanway, J., Bennett,
  L., Hassabis, D., Kavukcuoglu, K., and Irving, G.
\newblock Scaling language models: Methods, analysis {\&} insights from training gopher.
\newblock \emph{CoRR}, abs/2112.11446, 2021.
\newblock URL \url{https://arxiv.org/abs/2112.11446}.

\bibitem[Raffel et~al.(2020)Raffel, Shazeer, Roberts, Lee, Narang, Matena, Zhou, Li, and Liu]{raffel20exploring}
Raffel, C., Shazeer, N., Roberts, A., Lee, K., Narang, S., Matena, M., Zhou, Y., Li, W., and Liu, P.~J.
\newblock Exploring the limits of transfer learning with a unified text-to-text transformer.
\newblock \emph{J. Mach. Learn. Res.}, 21\penalty0 (1), January 2020.
\newblock ISSN 1532-4435.

\bibitem[Roberts et~al.(2019)Roberts, Raffel, Lee, Matena, Shazeer, Liu, Narang, Li, and Zhou]{c4data}
Roberts, A., Raffel, C., Lee, K., Matena, M., Shazeer, N., Liu, P.~J., Narang, S., Li, W., and Zhou, Y.
\newblock Exploring the limits of transfer learning with a unified text-to-text transformer.
\newblock Technical report, Google, 2019.

\bibitem[Sakaguchi et~al.(2021)Sakaguchi, Bras, Bhagavatula, and Choi]{winogrande}
Sakaguchi, K., Bras, R.~L., Bhagavatula, C., and Choi, Y.
\newblock Winogrande: an adversarial winograd schema challenge at scale.
\newblock \emph{Commun. ACM}, 64\penalty0 (9):\penalty0 99–106, August 2021.
\newblock ISSN 0001-0782.
\newblock \doi{10.1145/3474381}.
\newblock URL \url{https://doi.org/10.1145/3474381}.

\bibitem[Sap et~al.(2019)Sap, Rashkin, Chen, Le~Bras, and Choi]{sap19siqa}
Sap, M., Rashkin, H., Chen, D., Le~Bras, R., and Choi, Y.
\newblock Social {IQ}a: Commonsense reasoning about social interactions.
\newblock In Inui, K., Jiang, J., Ng, V., and Wan, X. (eds.), \emph{Proceedings of the 2019 Conference on Empirical Methods in Natural Language Processing and the 9th International Joint Conference on Natural Language Processing (EMNLP-IJCNLP)}, pp.\  4463--4473, Hong Kong, China, November 2019. Association for Computational Linguistics.
\newblock \doi{10.18653/v1/D19-1454}.
\newblock URL \url{https://aclanthology.org/D19-1454/}.

\bibitem[Shen et~al.(2025)Shen, Chen, Das, and Chen]{shen25seal}
Shen, H., Chen, P.-Y., Das, P., and Chen, T.
\newblock {SEAL}: Safety-enhanced aligned {LLM} fine-tuning via bilevel data selection.
\newblock In \emph{The Thirteenth International Conference on Learning Representations}, 2025.
\newblock URL \url{https://openreview.net/forum?id=VHguhvcoM5}.

\bibitem[Soldaini et~al.(2024)Soldaini, Kinney, Bhagia, Schwenk, Atkinson, Authur, Bogin, Chandu, Dumas, Elazar, Hofmann, Jha, Kumar, Lucy, Lyu, Lambert, Magnusson, Morrison, Muennighoff, Naik, Nam, Peters, Ravichander, Richardson, Shen, Strubell, Subramani, Tafjord, Walsh, Zettlemoyer, Smith, Hajishirzi, Beltagy, Groeneveld, Dodge, and Lo]{soldaini24dolma}
Soldaini, L., Kinney, R., Bhagia, A., Schwenk, D., Atkinson, D., Authur, R., Bogin, B., Chandu, K., Dumas, J., Elazar, Y., Hofmann, V., Jha, A., Kumar, S., Lucy, L., Lyu, X., Lambert, N., Magnusson, I., Morrison, J., Muennighoff, N., Naik, A., Nam, C., Peters, M., Ravichander, A., Richardson, K., Shen, Z., Strubell, E., Subramani, N., Tafjord, O., Walsh, E., Zettlemoyer, L., Smith, N., Hajishirzi, H., Beltagy, I., Groeneveld, D., Dodge, J., and Lo, K.
\newblock Dolma: an open corpus of three trillion tokens for language model pretraining research.
\newblock In Ku, L.-W., Martins, A., and Srikumar, V. (eds.), \emph{Proceedings of the 62nd Annual Meeting of the Association for Computational Linguistics (Volume 1: Long Papers)}, pp.\  15725--15788, Bangkok, Thailand, August 2024. Association for Computational Linguistics.
\newblock \doi{10.18653/v1/2024.acl-long.840}.
\newblock URL \url{https://aclanthology.org/2024.acl-long.840/}.

\bibitem[Talmor et~al.(2019)Talmor, Herzig, Lourie, and Berant]{talmor19commonsenseqa}
Talmor, A., Herzig, J., Lourie, N., and Berant, J.
\newblock {C}ommonsense{QA}: A question answering challenge targeting commonsense knowledge.
\newblock In Burstein, J., Doran, C., and Solorio, T. (eds.), \emph{Proceedings of the 2019 Conference of the North {A}merican Chapter of the Association for Computational Linguistics: Human Language Technologies, Volume 1 (Long and Short Papers)}, pp.\  4149--4158, Minneapolis, Minnesota, June 2019. Association for Computational Linguistics.
\newblock \doi{10.18653/v1/N19-1421}.
\newblock URL \url{https://aclanthology.org/N19-1421/}.

\bibitem[Wang et~al.(2025)Wang, Mittal, Song, and Jia]{wang2025data}
Wang, J.~T., Mittal, P., Song, D., and Jia, R.
\newblock Data shapley in one training run.
\newblock In \emph{The Thirteenth International Conference on Learning Representations}, 2025.
\newblock URL \url{https://openreview.net/forum?id=HD6bWcj87Y}.

\bibitem[Wang et~al.(2020)Wang, Pham, Michel, Anastasopoulos, Carbonell, and Neubig]{wang20diffRewards}
Wang, X., Pham, H., Michel, P., Anastasopoulos, A., Carbonell, J., and Neubig, G.
\newblock Optimizing data usage via differentiable rewards.
\newblock In \emph{Proceedings of the 37th International Conference on Machine Learning}, ICML'20. JMLR.org, 2020.

\bibitem[Wenzek et~al.(2020)Wenzek, Lachaux, Conneau, Chaudhary, Guzm{\'a}n, Joulin, and Grave]{wenzek20ccnet}
Wenzek, G., Lachaux, M.-A., Conneau, A., Chaudhary, V., Guzm{\'a}n, F., Joulin, A., and Grave, E.
\newblock {CCN}et: Extracting high quality monolingual datasets from web crawl data.
\newblock In Calzolari, N., B{\'e}chet, F., Blache, P., Choukri, K., Cieri, C., Declerck, T., Goggi, S., Isahara, H., Maegaard, B., Mariani, J., Mazo, H., Moreno, A., Odijk, J., and Piperidis, S. (eds.), \emph{Proceedings of the Twelfth Language Resources and Evaluation Conference}, pp.\  4003--4012, Marseille, France, May 2020. European Language Resources Association.
\newblock ISBN 979-10-95546-34-4.
\newblock URL \url{https://aclanthology.org/2020.lrec-1.494/}.

\bibitem[Zellers et~al.(2018)Zellers, Bisk, Schwartz, and Choi]{zellers2018swagaf}
Zellers, R., Bisk, Y., Schwartz, R., and Choi, Y.
\newblock Swag: A large-scale adversarial dataset for grounded commonsense inference.
\newblock In \emph{Proceedings of the 2018 Conference on Empirical Methods in Natural Language Processing (EMNLP)}, 2018.

\bibitem[Zhao et~al.(2025)Zhao, Huang, Lv, Cui, Sun, Mao, Zhang, Xin, Yin, Li, and Wei]{zhao-et-al-2025-mmlu-cf}
Zhao, Q., Huang, Y., Lv, T., Cui, L., Sun, Q., Mao, S., Zhang, X., Xin, Y., Yin, Q., Li, S., and Wei, F.
\newblock {MMLU}-cf: A contamination-free multi-task language understanding benchmark.
\newblock In \emph{Proceedings of the 63rd Annual Meeting of the Association for Computational Linguistics (Volume 1: Long Papers)}, pp.\  13371--13391. Association for Computational Linguistics, 2025.
\newblock URL \url{aclanthology.org}.

\end{thebibliography}

\newpage
\appendix
\onecolumn

\section{Prompt Template}\label{app:prompt_general}

Below is the general purpose prompt template alluded to in the main text. We refer to this prompt template as \Algo{General} in the main body of the paper.

\begin{tcolorbox}[colframe=black]
        Role: You are a meticulous AI Data Curator. Your primary mission is to evaluate web text to build a diverse and high-quality dataset for training a large language model from scratch. Your goal is to select content that will teach the model language fundamentals, common sense, and a broad understanding of the world.
        Objective: Analyze the provided text and assign it a quality score. The score must reflect the text's utility in teaching a nascent AI about language, facts, reasoning, and concepts. You are building a library for a new mind; it needs everything from children's stories to technical manuals.
        Core Task: You will be given a piece of text content. Evaluate it against the criteria below and provide your assessment in a structured JSON format.

        Core Evaluation Criteria
        1. Linguistic Quality (Weight: 40
          Definition: The text's grammatical correctness, clarity, and structural coherence. This is the most important factor, as the model learns syntax and sentence structure from this.
          High Quality: Sentences are well-formed, grammatically correct, and use clear language. The text is free of significant spelling errors. It is well-structured and easy to read.
          Low Quality: Gibberish, "wall of text," numerous grammatical/spelling errors, or incoherent sentence structure.
        2. Knowledge Value (Weight: 40
          Definition: The informational content of the text. Unlike fine-tuning, both foundational and specialized knowledge are highly valuable.
          High Quality (Two valid types):
          Type A: Foundational \& Common-Sense: Clearly explains basic concepts, facts, or common-sense relationships (e.g., "The sun is a star," "Dogs are mammals that bark," "To make a sandwich, you need bread"). This includes simple narratives, descriptions, and instructions.
          Type B: Specialized \& In-Depth: Provides detailed, factual information on a specific topic (e.g., technical explanations, historical analysis, scientific articles).

          Low Quality: Contains no discernible information, is purely repetitive, or makes claims that are nonsensical.
        3. Safety \& Reliability (Weight: 20
          Definition: The text's trustworthiness and safety. The model should be built on a foundation of reality and avoid learning harmful biases or misinformation.
          High Quality: The content is factually sound (for its context), logical, and coherent. It is free from hate speech, toxicity, and harmful instructions.
          Low Quality: Contains blatant misinformation, conspiracy theories, pseudoscience, hate speech, or dangerous content.

      Nuanced Cases: Advertising \& Product Descriptions
      Valuable: Product descriptions that clearly list features, materials, functions, or benefits are useful. They teach the model about objects, attributes, and descriptive language. (e.g., "This 16-inch laptop features an M3 processor, 16GB of RAM, and a Liquid Retina XDR display.")
      Not Valuable: Purely manipulative or spammy ad copy with no informational content. (e.g., "Click here now! Best deals! Unbelievable prices! Don't wait!")

      Red Flags (Content to be automatically classified as "Very Poor"):
      Immediately score the content as 0 if it is predominantly any of the following:
      Gibberish, garbled text, or placeholder text (e.g., "lorem ipsum").
      Hate speech, extreme toxicity, or directly harmful content.
      Keyword stuffing that renders the text unreadable.
      Purely navigational elements (e.g., a list of links with no context).
      Deceptive or purely manipulative advertising with zero informational value.

      Output Format
      Your response MUST be a JSON object with the following schema:
      \{
        "quality\_score": $<$An integer from 0 to 5$>$,
        "reasoning": "$<$A brief, one-sentence explanation for your score, referencing the criteria.$>$",
        "content\_type": $<$"Foundational", "Specialized", "Descriptive", "Instructional", "Narrative", "Conversational", "Other"$>$,
        "is\_training\_candidate": $<$true or false$>$
      \}
      Here is the document to be rated:    
\end{tcolorbox}

\newpage

\section{Further Results}\label{sa:results}

Table~\ref{tab:ablation} contains absolute scores from our experiments across models, tasks, and filtering methods.

\begin{table}[ht!]
    \centering
    \begin{tabular}{|c|c|c|c|c|c|c|c|c|c|c|c|c|}
\hline \hline Model size & \multicolumn{9}{c|}{270M} \\ \hline
Dataset & Filtering  & HellaSwag	& ARC	& Winogrande	& PIQA	& SIQA	& CQA	& MMLU	& BookQA	\\ \hline \hline 
ThePile & Baseline	& $35.5$	& $25.1$	& $52.9$	& $64.7$	& $44.4$	& $32.4$	& $24.8$	& $25.4$	\\
ThePile  & CB(FW-EDU)	& $34.9$	& $24.7$	& $52.4$	& $65.0$	& $43.6$	& $30.6$	& $25.0$	& $24.6$	\\
ThePile & \TBDF(FW-EDU)	& $36.2$	& $26.1$	& $54.1$	& $66.8$	& $44.6$	& $35.1$	& $26.2$	& $25.2$	\\
ThePile & \TBDF(General)	& $35.6$	& $25.9$	& $54.1$	& $66.3$	& $44.7$	& $32.2$	& $25.7$	& $25.2$	\\
\hline
C4 & Baseline	& $41.7$	& $25.0$	& $51.2$	& $70.0$	& $44.1$	& $32.7$	& $25.4$	& $25.1$	\\
C4  & CB(FW-EDU)	& $42.4$	& $25.0$	& $53.6$	& $69.7$	& $44.9$	& $32.8$	& $25.8$	& $24.8$	\\
C4 & \TBDF(FW-EDU)	& $42.3$	& $25.3$	& $52.6$	& $71.9$	& $46.0$	& $35.1$	& $27.2$	& $25.0$	\\
C4 & \TBDF(General)	& $42.8$	& $24.8$	& $54.9$	& $69.8$	& $45.6$	& $33.9$	& $26.8$	& $27.8$	\\
\hline
FineWeb  & Baseline	& $40.1$	& $25.0$	& $52.7$	& $69.6$	& $44.2$	& $32.5$	& $26.0$	& $25.0$	\\
FineWeb  & CB(FW-EDU)	& $40.3$	& $29.6$	& $51.6$	& $67.0$	& $43.3$	& $32.9$	& $25.1$	& $23.7$	\\
FineWeb  & \TBDF(FW-EDU)	& $42.2$	& $30.3$	& $53.5$	& $69.9$	& $44.3$	& $34.4$	& $25.7$	& $27.3$	\\
FineWeb  & \TBDF(General)	& $43.4$	& $27.6$	& $53.1$	& $72.1$	& $44.8$	& $35.5$	& $26.4$	& $26.6$	\\
\hline
\hline Model size & \multicolumn{9}{c|}{1B} \\ \hline
Dataset & Filtering  & HellaSwag	& ARC	& Winogrande	& PIQA	& SIQA	& CQA	& MMLU	& BookQA	\\ \hline \hline 
ThePile & Baseline	& $51.2$	& $29.4$	& $54.9$	& $71.8$	& $45.3$	& $36.6$	& $26.2$	& $24.8$	\\
ThePile  & CB(FW-EDU)	& $52.0$	& $31.1$	& $54.5$	& $71.4$	& $45.6$	& $36.0$	& $26.2$	& $24.8$	\\
ThePile & \TBDF(FW-EDU)	& $53.4$	& $33.4$	& $56.7$	& $72.6$	& $45.6$	& $36.9$	& $26.5$	& $27.3$	\\
ThePile & \TBDF(General)	& $54.6$	& $31.1$	& $56.6$	& $74.1$	& $46.8$	& $38.2$	& $26.6$	& $25.9$	\\
\hline
C4 & Baseline	& $59.9$	& $28.7$	& $57.0$	& $76.5$	& $47.8$	& $37.2$	& $26.6$	& $23.9$	\\
C4  & CB(FW-EDU)	& $62.0$	& $31.2$	& $55.9$	& $76.9$	& $47.5$	& $39.4$	& $25.8$	& $23.1$	\\
C4 & \TBDF(FW-EDU)	& $61.6$	& $31.8$	& $58.2$	& $76.7$	& $48.2$	& $39.1$	& $27.2$	& $25.9$	\\
C4 & \TBDF(General)	& $62.6$	& $32.3$	& $57.9$	& $77.1$	& $48.5$	& $40.6$	& $26.3$	& $26.7$	\\
\hline
FineWeb  & Baseline	& $58.5$	& $30.9$	& $54.9$	& $75.2$	& $47.0$	& $38.2$	& $25.4$	& $23.6$	\\
FineWeb  & CB(FW-EDU)	& $57.8$	& $39.9$	& $57.3$	& $74.1$	& $45.8$	& $38.7$	& $24.5$	& $24.0$	\\
FineWeb  & \TBDF(FW-EDU)	& $60.3$	& $40.4$	& $56.8$	& $75.9$	& $47.5$	& $38.2$	& $27.5$	& $26.2$	\\
FineWeb  & \TBDF(General)	& $62.5$	& $38.1$	& $58.3$	& $77.5$	& $46.7$	& $40.1$	& $25.2$	& $26.9$	\\
\hline
\hline Model size & \multicolumn{9}{c|}{4B} \\ \hline
Dataset & Filtering  & HellaSwag	& ARC	& Winogrande	& PIQA	& SIQA	& CQA	& MMLU	& BookQA	\\ \hline \hline 
ThePile & Baseline	& $65.5$	& $38.2$	& $62.3$	& $76.7$	& $47.8$	& $41.6$	& $27.2$	& $28.2$	\\
ThePile  & CB(FW-EDU)	& $64.6$	& $37.8$	& $60.8$	& $76.6$	& $47.5$	& $41.1$	& $27.3$	& $28.9$	\\
ThePile & \TBDF(FW-EDU)	& $67.7$	& $42.0$	& $61.2$	& $77.8$	& $48.0$	& $44.5$	& $30.2$	& $29.0$	\\
ThePile & \TBDF(General)	& $66.7$	& $40.4$	& $62.8$	& $78.0$	& $48.6$	& $42.3$	& $28.5$	& $34.5$	\\
\hline
C4 & Baseline	& $72.8$	& $35.8$	& $64.3$	& $79.1$	& $48.5$	& $43.1$	& $26.8$	& $24.5$	\\
C4  & CB(FW-EDU)	& $73.4$	& $40.1$	& $64.7$	& $79.8$	& $48.3$	& $43.1$	& $26.5$	& $23.4$	\\
C4 & \TBDF(FW-EDU)	& $72.6$	& $39.2$	& $64.7$	& $79.7$	& $48.4$	& $42.4$	& $25.6$	& $25.3$	\\
C4 & \TBDF(General)	& $73.7$	& $40.0$	& $66.0$	& $79.7$	& $49.7$	& $43.4$	& $25.8$	& $27.3$	\\
\hline
FineWeb  & Baseline	& $71.1$	& $37.9$	& $63.1$	& $78.4$	& $48.8$	& $43.1$	& $25.4$	& $22.2$	\\
FineWeb  & CB(FW-EDU)	& $68.6$	& $48.5$	& $60.6$	& $78.2$	& $49.1$	& $42.7$	& $26.0$	& $27.5$	\\
FineWeb  & \TBDF(FW-EDU)	& $70.7$	& $47.3$	& $62.3$	& $79.6$	& $48.8$	& $42.3$	& $27.9$	& $27.6$	\\
FineWeb  & \TBDF(General)	& $72.5$	& $45.5$	& $63.5$	& $79.7$	& $49.5$	& $43.8$	& $25.7$	& $25.9$	\\
\hline
\end{tabular}
    \caption{Ablation results for each evaluation task and data filtering method. We used Gemma 3 models with 270M, 1B and 4B parameters~\cite{gemmateam2025gemma3technicalreport}.}
    \label{tab:ablation}
\end{table}


\end{document}